\documentclass[10pt]{article} % For LaTeX2e
\usepackage[preprint]{tmlr}
\usepackage{amsmath,amsfonts,bm}

\def\eqref#1{equation~\ref{#1}}
\def\1{\bm{1}}

\DeclareMathAlphabet{\mathsfit}{\encodingdefault}{\sfdefault}{m}{sl}
\SetMathAlphabet{\mathsfit}{bold}{\encodingdefault}{\sfdefault}{bx}{n}

\DeclareMathOperator*{\argmax}{arg\,max}

\usepackage[utf8]{inputenc}
\usepackage[T1]{fontenc}    
\usepackage{hyperref}       
\usepackage{url}          
\usepackage{booktabs}     
\usepackage{amsfonts}      
\usepackage{nicefrac}       
\usepackage{microtype}      
\usepackage{xcolor}         
\usepackage{microtype}
\usepackage{graphicx}
\usepackage{subcaption} 
\usepackage{float}

\usepackage{hyperref}

\usepackage{amsmath}
\usepackage{amssymb}
\usepackage{mathtools}
\usepackage{amsthm}
\usepackage{algorithm}
 
\usepackage{algorithmic}
\usepackage{titletoc}

\theoremstyle{plain}
\newtheorem{theorem}{Theorem}[section]

\newtheorem{lemma}[theorem]{Lemma}

\theoremstyle{definition}

\theoremstyle{definition}
\newtheorem{remark}[theorem]{Remark}

\title{Flow Duality and Source Geometry for Categorical Generation}

\author{Etrit Haxholli \\
\texttt{etrit.ks@gmail.com} \\
}

\begin{document}

\maketitle

\begin{abstract}

Continuous and discrete flow matching are usually treated as separate constructions. This paper identifies a duality between them: projecting continuous convex-interpolant flows with one-hot targets through a position-wise argmax yields discrete convex-interpolant flows. The result requires source laws with appropriate coordinate symmetry and regularity, and it makes the continuous source distribution an explicit design choice for categorical generation. We derive the induced discrete interpolation behavior for Gaussian, bounded-uniform, and centered negative-exponential sources, showing that different source geometries lead to qualitatively different transition timing and vocabulary-size dependence. Small visual diagnostics and a short language-modeling pilot suggest that these source-design effects can also appear in learned transports and early generative quality.
\end{abstract}

\section{Introduction}

Generative modeling over discrete objects such as language, code, and symbolic sequences is commonly formulated through discrete diffusion or discrete flow matching. These models define a time-indexed process on a finite state space and learn probability velocities or denoising distributions that move mass from a simple source distribution to the data distribution. 

Recent work on diffusion duality \citep{sahoo2025the} has shown that uniform discrete diffusion processes \citep{lou2023discrete, campbell2022continuous} can be understood as projections of an underlying continuous diffusion process \citep{song2021scorebased}. This raises a natural question for flow matching \citep{lipman2023flow}: does an analogous duality hold for continuous flows with convex interpolants? In particular, if a continuous flow is defined over one-hot token representations \citep{lee2026flowmaplanguage} and then projected back to tokens through the argmax map, what discrete flow is induced?

The main result presented in this paper shows that, under an argmax-compatible lifted coupling and product coordinate-permutation-invariant continuous sources, this projected process is a discrete flow with convex interpolants and uniform source. When the per-position source laws are identical, the induced interpolation coefficient is shared across positions; otherwise the same statement holds with position-dependent coefficients.

This duality makes the continuous source distribution a mathematically meaningful design choice. In one-hot categorical generation, the target coordinate must compete under argmax against the source coordinates. For a Gaussian source, this competition becomes more difficult as the vocabulary size grows, because the target coordinate must overtake the maximum of many noise coordinates. The induced discrete coefficient therefore becomes delayed at large vocabulary size \citep{dieleman2022continuous, sahoo2025the, lee2026flowmaplanguage}. Alternative sources can change this coefficient and, potentially, the geometry of the transport.

We enrich the duality result by deriving the induced discrete coefficient for three source families: Gaussian, bounded uniform, and centered negative-exponential sources. The Gaussian coefficient is vocabulary-dependent and delayed as reported in previous work \citep{dieleman2022continuous, sahoo2025the, lee2026flowmaplanguage}. For fixed vocabulary size, the bounded uniform coefficient is controlled by the support width. The centered negative-exponential coefficient has a simple form independent of vocabulary size. 

We then test the performance of teacher-free two-time denoiser models \citep{lee2026flowmaplanguage} when different sources are used. The results show performance improvements when the aforementioned alternative sources are used. These results are preliminary as the number of training steps is small and the experiments were performed only once.

The paper is organized as follows. Section~2 introduces background and notation. Section~3 presents the duality theorem, its supporting lemmas, source-coefficient extensions, and the preliminary experimental results. All duality proofs and coefficient derivations are given in the appendix.

\section{Background and Notation}
A summary of Discrete Flow Matching is provided below.

\subsection{Discrete Flow Matching}

To expand the design space of discrete diffusion models, \citet{campbell2024generative, gat2024discrete} introduce discrete flow matching. We follow the approach and notation of \citet{gat2024discrete}. In discrete sequence modeling, a sequence (state) $x$ consists of $L$ elements $(x^1, x^2, \dots, x^L)$. Each position $i$ contains an element $x^i$ from a vocabulary $\mathcal{V}$ with $|\mathcal{V}|=V$, which we identify with $[V]=\{1,\dots,V\}$. Thus, the set of possible sequences is $\mathcal{D}=[V]^L$. Two sequences are neighbors if they differ in only one position.

We denote by $p^i(x^i)$ the marginal of $p$ at position $i$, i.e.,
$$p^i(x^i) = \sum_{x^{-i}} p(x)\text{, where } x^{-i} = (x^1,\dots, x^{i-1}, x^{i+1}, \dots, x^L).$$ The following delta function notation will be particularly useful,
\begin{equation}
    \delta_y(x) = \prod_{i=1}^L \delta_{y^i}(x^i), \text{ where } \delta_{y^i}(x^i) = 
\begin{cases}
    1 & \text{if } x^i = y^i \\
    0 & \text{if } x^i \neq y^i
\end{cases}.
\end{equation}

\subsubsection{Probability Flows and Velocities}
In discrete flow matching \citep{gat2024discrete}, the goal is to define or learn a flow $p_t(x):[0,1]\times [V]^L\rightarrow[0,1]$ constrained by $\sum_{x\in [V]^L}p_t(x)=1$ that transforms source (reference) distributions $X_0 \sim p$ to target (data) distributions $X_1 \sim q$. The flow is completely defined by the choice of a probability velocity $u_t(x):[0,1)\times\mathcal{D}\rightarrow\mathbb{R}^{L\times V}$, with $u_t(x)=(u_t^1(x),\dots,u_t^L(x))$ and $u_t^i(x)\in\mathbb{R}^V$. For a candidate token $a\in[V]$, write $u_t^i(a,x):=u_t^i(x)[a]$. These rates satisfy $u_t^i(a,x)\ge 0$ for $a\neq x^i$ and $\sum_{a\in[V]}u_t^i(a,x)=0$. The infinitesimal, first-order update of the probability over states when going from time $t$ to $t+\epsilon$ is defined independently for each position as $p^i_{t+\epsilon|t}(a\mid x)=\delta_{x^i}(a)+\epsilon u_t^i(a,x)+o(\epsilon)$. Therefore, we can see that as in the framework of discrete diffusion, the probability over the states in the next step depends solely on the current state, and that $u_t$ plays a similar role to a transition-rate matrix ${Q}_t$, completely determining the flow. As such, if we approximate the probability velocity $u_t(x)$ using a neural network $u_t(x,t;\theta): [V]^L\times[0,1]\rightarrow \mathbb{R}^{L\times V}$, we can sample from $p$ and generate data from $q$, using the previous update rule. Before modeling the probability velocity $u_t(x)$ however, one must first design an appropriate flow $p_t(x)$ that has a suitable, practically learnable corresponding $u_t(x)$.

\subsubsection{Conditional Probability Flows}\label{cpf}

 Since at time $t=0$ and $t=1$ we must have $p_0 = p$ and $p_1 = q$ respectively, we are already restricted regarding the endpoints of the flow. A trivial way to satisfy such constraints is to define 
\begin{equation}\label{main_flow_def}
    p_t(x) = \sum_{x_0, x_1 \in \mathcal{D}} p_t(x|x_0, x_1)\pi(x_0, x_1),
\end{equation}
where $p_0(x|x_0, x_1)=\delta_{x_0}(x)$, $p_1(x|x_0, x_1)=\delta_{x_1}(x)$ and $ \pi(X_0, X_1)$ is an arbitrary joint distribution of $X_0$, $X_1$ satisfying the marginals constraints $p(x) = \sum_{y \in \mathcal{D}} \pi(x, y)$, $q(y) = \sum_{x \in \mathcal{D}} \pi(x, y)$. Since the probability velocities update the probability independently for each position, it is natural to define $p_t(x|x_0, x_1)$ independently for each dimension as in \citet{gat2024discrete}:
\begin{equation}\label{pos_independ}
p_t(x|x_0, x_1) = \prod_{i=1}^L p^i_t(x^i|x_0, x_1), 
\end{equation}
\begin{equation}\label{convex_cond_flow}
\text{where } p_t^i(x^i | x_0, x_1) =(1 - k_t) \delta_{x_0^i}(x^i) + k_t \delta_{x_1^i}(x^i),  \text{with } k_0=0, k_1=1 \text{ and increasing } k_t.\ \ \ \ \ \ \ \ \ \ \ \ \ \ \ \ \ \ \ \ \ \ 
\end{equation}
It is clear that this definition of $p_t(x|x_0, x_1)$ satisfies the conditions $p_0(x|x_0, x_1)=\delta_{x_0}(x)$ and $p_1(x|x_0, x_1)=\delta_{x_1}(x)$. Assume additionally that $k_t$ is absolutely continuous on $[0,1]$. Then \citet{gat2024discrete} show that component $i$ of the conditional probability velocity $u_t^i(a,x\mid x_0,x_1)$ corresponding to the flow defined in Equations (\ref{pos_independ}) and (\ref{convex_cond_flow}) is
\begin{equation}\label{convex_cond_velocity}
u_t^i(a, x\mid x_0, x_1) =\frac{\dot{k}_t}{1 - k_t}  \left[\delta_{x_1^i}(a)-\delta_{x^i}(a)\right].
\end{equation}
Furthermore, they show that the probability velocity corresponding to the unconditional flow $p_t(x)$ can be written as 
\begin{equation}\label{uncond_vect_field}
u_t^i(a, x) =  \sum_{x_0, x_1 \in \mathcal{D}}
u_t^i(a, x\mid x_0, x_1)\, p_{0,1|t}(x_0, x_1\mid x),
\end{equation}
 which in the case of Equations (\ref{convex_cond_flow}) and (\ref{convex_cond_velocity}) implies, $u_t^i(a, x)= \frac{\dot{k}_t}{1 - k_t} \left[ p_{1|t}^i(a \mid x,t) - \delta_{x^i}(a) \right].$
One then approximates $u_t^i(a, x)$ by modeling the time-conditioned posterior predictor with a neural network $p^i_{1|t}(a\mid x,t;\theta)$  using the cross-entropy loss, 
\begin{equation}\label{optimization_objective}
\mathcal{L}_{\mathrm{DFM}}
= -\mathbb{E}_{t \sim U(0,1)} \mathbb{E}_{x_0, x_1\sim \pi(x_0, x_1)}\mathbb{E}_{x_t \sim p_{t|0,1}(\cdot |x_0, x_1)} \sum_{i=1}^L \log p^i_{1|t}(x_1^i \mid x_t,t;\theta).
\end{equation}
It should be mentioned that in \citet{gat2024discrete}, the definition of $p_t^i(x^i | x_0, x_1)$ is given in a more general form, but here we focus on this specific case for simplicity.

\subsubsection{Source and Target Distributions}
As mentioned, points $X_0$ and $X_1$ are sampled from a joint distribution $\pi(x, y)$, i.e. $(X_0, X_1) \sim \pi(X_0, X_1)$, satisfying the marginals constraints $p(x) = \sum_{y \in \mathcal{D}} \pi(x, y)$, $q(y) = \sum_{x \in \mathcal{D}} \pi(x, y)$. 
As a special case, the training pairs $X_0$ and $X_1$ can be sampled independently, $(X_0, X_1) \sim p(X_0) q(X_1)$.
Common instantiations of source distribution $p$ are: 
\newline
(i) adding a special token value often referred to as a \emph{mask} token, denoted here by $m$. This uses an enlarged vocabulary $\mathcal V_{\mathrm{mask}}=[V]\cup\{m\}$, outside the strict $[V]^L$ setup unless $m$ is already included in the vocabulary. The source is $X_0=(m,\dots,m)$ almost surely, with $X_1\sim q$ under the chosen coupling.
\newline
(ii) using uniform distribution over $\mathcal{D}$, which is equivalent to drawing each $x^i$ independently to be some value in $[V]$ with equal probability, denoted $p_u(x^i)$.

\subsubsection{Continuous Flows}\label{cont_flow_sec}

Let $\tilde{\pi}$ be a probability measure on $\mathbb{R}^{LV}\times\mathbb{R}^{LV}$, with first and second marginals $\tilde p_0$ and $\tilde p_1$. We refer to $\tilde p_0$ as the continuous source and $\tilde p_1$ as the continuous target. This measure notation is important in the present setting because the target may be supported on one-hot vectors and therefore need not admit a Lebesgue density.

For a nondecreasing interpolant coefficient $\tilde{k}_t:[0,1]\to[0,1]$ satisfying $\tilde{k}_0=0$ and $\tilde{k}_1=1$, define
\begin{equation}
    T_t(z_0,z_1)=(1-\tilde k_t)z_0+\tilde k_t z_1.
\end{equation}
The continuous flow \citep{lipman2023flow, tong2023improving} is the pushforward
\begin{equation}
    \tilde p_t = (T_t)_\#\tilde\pi.
\end{equation}
Equivalently, conditional on a coupled endpoint pair $(z_0,z_1)$, the trajectory is deterministic and has the form
\begin{equation}
    z_t=(1-\tilde k_t)z_0+\tilde k_t z_1.
\end{equation}
Thus, the conditional trajectories are straight line segments between coupled endpoints, possibly with a time reparameterization through $\tilde k_t$. When a classical continuous velocity field is discussed, we additionally assume $\tilde k_t$ is absolutely continuous, hence differentiable almost everywhere.

\subsubsection{Diffusion duality}\label{subsec:diffusion_duality_background}

\citet{sahoo2025the} showed that a uniform-state discrete diffusion process can be realized as the argmax projection of an underlying Gaussian diffusion. In this view, the continuous latent process evolves in Euclidean space while the observed categorical state is obtained by applying argmax to each vocabulary block. This makes it possible to transfer continuous-diffusion ideas to discrete generation, including lower-variance training curricula and consistency-style distillation. The result motivates the question studied here: whether an analogous projection principle holds for more general continuous flows with convex interpolants rather than diffusion noising processes.

\subsubsection{Two-time denoiser and flow maps}\label{subsec:two_time_denoiser_background}

\citet{lee2026flowmaplanguage} study continuous flows over one-hot token embeddings at the target distribution. Let $\Phi_{s,t}$ denote the map that transports a continuous state from time $s$ to time $t$, and define the average velocity
\begin{equation}
    v_{s,t}(x) := \frac{\Phi_{s,t}(x)-x}{t-s}.
\end{equation}
Their two-time denoiser is the reparameterization
\begin{equation}
    \delta_{s,t}(x) := x+(1-s)v_{s,t}(x),
\end{equation}
which can be inverted to recover the flow map:
\begin{equation}
    \Phi_{s,t}(x)
    = \frac{1-t}{1-s}x+\frac{t-s}{1-s}\delta_{s,t}(x).
\end{equation}
For one-hot targets, $\delta_{s,t}$ lies on the probability simplex, so it can be represented by a token-wise softmax and trained with cross-entropy and KL objectives (e.g. the semi-group condition). At $s=t$ it reduces to the usual endpoint denoiser, while at $t=1$ it gives the final clean prediction map.

\subsubsection{Notation summary}

For reference, the main probability symbols used below are summarized here:
\begin{center}
\begin{tabular}{ll}
\toprule
Symbol & Meaning \\
\midrule
$\tilde p_t$ & continuous marginal law on $\mathbb R^{LV}$ \\
$p_t=(\argmax)_{\#}\tilde p_t$ & discrete argmax pushforward law on $[V]^L$ \\
$p_{t|0,1}$ & discrete conditional law of $X_t$ given $(X_0,X_1)$ \\
$p_{t|0,1}^i$ & per-position conditional law at position $i$ \\
$q$ & target/data law on $[V]^L$ \\
$\pi(x_1\mid x_0)$ & discrete coupling kernel used in the lifted construction \\
$\bar\pi(x_0,x_1)$ & induced discrete joint $p_0(x_0)\pi(x_1\mid x_0)$ \\
$\Phi_{s,t}$ & continuous flow map from time $s$ to time $t$ \\
$v_{s,t}$ & average velocity associated with $\Phi_{s,t}$ \\
$\delta_{s,t}$ & two-time denoiser of \citet{lee2026flowmaplanguage} \\
$D_t$ & endpoint denoiser, recovered as the diagonal case of $\delta_{s,t}$ \\
\bottomrule
\end{tabular}
\end{center}

\section{Duality}
In what follows, we demonstrate that applying the $argmax$ operator position-wise along a continuous flow with convex interpolants results in a corresponding discrete convex-interpolant flow. Moreover, if the continuous source is coordinate-permutation invariant, then the corresponding discrete source distribution is uniform. This establishes a connection between discrete convex-interpolant flows with uniform sources and continuous convex-interpolant flows with coordinate-permutation invariant source distributions. Our result generalizes the diffusion-duality viewpoint of \citet{sahoo2025the} to the setting of flows. All proofs can be found in Appendix \ref{A}.

\subsection{The Duality Between Discrete and Continuous Flows with Convex Interpolants}

 In what follows, $\tilde p_t$ denotes the continuous marginal law at time $t$; when this law admits a density, we use the same symbol for that density by abuse of notation.
 
We define a continuous flow as in Section \ref{cont_flow_sec}, where each block $z^i \in \mathbb{R}^V$ corresponds to the $i$-th position:
\begin{equation}
z^i := z[(i-1)V+1 : iV] \in \mathbb{R}^V,
\qquad i=1,\dots,L.
\end{equation}

We define the per-position mapping
\begin{equation}
\argmax : \mathbb{R}^V \to [V]
\end{equation}
as follows. First, for $u \in \mathbb{R}^V$, let
\begin{equation}
S(u) := \{j \in [V] : u_j = \max_{k \in [V]} u_k\}.
\end{equation}
We then define
\begin{equation}
\argmax(u) := \min S(u).
\end{equation}
Thus ties are broken by taking the smallest index. We extend this to sequences by
\begin{equation}
\argmax : \mathbb{R}^{LV} \to [V]^L, 
\qquad 
(\argmax(z))^i := \argmax(z^i),\ i=1,\dots,L.
\end{equation}

Finally, we denote by $\mathcal{B}(\cdot)$ the Borel $\sigma$-algebra, and by $\mathcal{P}([V]^L)$ the power set of $[V]^L$.

Now, we first verify that $\argmax$ is measurable and thus induces a pushforward of any probability measure on $\mathbb{R}^{LV}$ to a probability measure on $[V]^L$.

\begin{lemma}
\label{lem:argmax-measurable}
The mapping $\argmax : (\mathbb{R}^{LV}, \mathcal{B}(\mathbb{R}^{LV})) \to ([V]^L, \mathcal{P}([V]^L))$ is measurable.
\end{lemma}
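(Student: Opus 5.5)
Since the codomain $[V]^L$ is finite and carries its power set, every subset is a finite union of singletons. So it suffices to show that each fiber $\argmax^{-1}(\{x\})$, for $x=(x^1,\dots,x^L)\in[V]^L$, is a Borel subset of $\mathbb{R}^{LV}$. Measurability of preimages of arbitrary subsets then follows, because preimages commute with finite unions.

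Because the map acts blockwise, the fiber factorizes as
\begin{equation}
\argmax^{-1}(\{x\})=\bigcap_{i=1}^L \{z\in\mathbb{R}^{LV} : \argmax(z^i)=x^i\}.
\end{equation}
The block projection $z\mapsto z^i$ is a linear, hence continuous, map $\mathbb{R}^{LV}\to\mathbb{R}^V$. It therefore suffices to show that, for each $j\in[V]$, the set $A_j:=\{u\in\mathbb{R}^V:\argmax(u)=j\}$ is Borel in $\mathbb{R}^V$. The preimage of $A_j$ under the projection is then Borel in $\mathbb{R}^{LV}$, and a finite intersection of Borel sets is Borel.

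Next I unpack the tie-breaking rule $\argmax(u)=\min S(u)$. We have $\argmax(u)=j$ exactly when $u_j\ge u_k$ for all $k$ and $u_j>u_k$ for all $k<j$. Hence
\begin{equation}
A_j=\bigcap_{k>j}\{u:u_j-u_k\ge 0\}\cap\bigcap_{k<j}\{u:u_j-u_k>0\}.
\end{equation}
Each set in this intersection is either a closed or an open half-space, being the preimage of $[0,\infty)$ or $(0,\infty)$ under a continuous linear functional. So $A_j$ is a finite intersection of Borel sets and is Borel.

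The only step needing care is the tie-breaking characterization. One must check that the minimal index of the maximizing set is precisely $j$, which means strict inequalities against earlier indices and weak ones against later indices. Once that is confirmed, the rest is routine.
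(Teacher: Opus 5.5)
Your proposal is correct and follows essentially the same route as the paper: you characterize $A_j$ as an intersection of strict half-spaces for $k<j$ and weak half-spaces for $k>j$, then build each fiber from these blockwise. The paper writes the fiber as the product $\prod_{i=1}^L A(x^i)$ rather than as an intersection of preimages under block projections, which is the same set. Your explicit remark that every subset of the finite codomain is a finite union of singletons spells out a step the paper leaves implicit.
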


As a consequence, for any probability measure $\mu$ on $(\mathbb{R}^{LV}, \mathcal{B}(\mathbb{R}^{LV}))$, $(\argmax)_{\#}\mu$ is a well-defined probability measure on $([V]^L, \mathcal{P}([V]^L))$. In particular, for the continuous marginal $\tilde p_t$ we define
\begin{equation}
 p_t := (\argmax)_{\#}\tilde{p}_t.
\end{equation} For time $t=0$ we have the following result:

\begin{lemma}[Uniform discrete source]
\label{lem:uniform-source}
For continuous source distributions $\tilde{p}_0=\prod_{i=1}^L \tilde{p}^i_0$, assume each $\tilde{p}^i_0$ is coordinate-permutation invariant and has no ties almost surely, i.e.
\begin{equation}
\tilde p_0^i\bigl(\{u\in\mathbb R^V:\exists k\neq \ell,\ u_k=u_\ell\}\bigr)=0.
\end{equation}
Then the pushforward $p_0=(\argmax)_{\#}\tilde p_0$ is the uniform distribution on $[V]^L$. In particular, for each position $i$ and each token $j \in [V]$,
\begin{equation}
p_0^i(j) = \frac{1}{V},
\end{equation}
where $x_0 := \argmax(z_0)$ and $z_0 \sim \tilde{p}_0$.
\end{lemma}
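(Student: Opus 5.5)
The plan is to reduce to a single position, use permutation invariance to show the $V$ events $\{\argmax(u)=j\}$ all have the same probability, and then use independence across positions to get uniformity on $[V]^L$. By Lemma~\ref{lem:argmax-measurable}, each event $\{z:\argmax(z)=x\}$ is Borel, so all probabilities below are well defined.

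Fix a position $i$ and let $u\sim\tilde p^i_0$ on $\mathbb R^V$. For $j\in[V]$ set
\[
A_j:=\{u\in\mathbb R^V: u_j>u_k\ \text{for all } k\neq j\}.
\]
The sets $A_1,\dots,A_V$ are pairwise disjoint. Their union is exactly the complement of the tie set $N:=\{u:\exists k\neq\ell,\ u_k=u_\ell\}$, up to points where the maximum is attained more than once, and those points lie in $N$. On $A_j$ we have $S(u)=\{j\}$, so $\argmax(u)=j$ regardless of the tie-breaking rule. Hence
\[
\tilde p^i_0\bigl(\{\argmax(u)=j\}\bigr)=\tilde p^i_0(A_j)+\tilde p^i_0\bigl(\{\argmax(u)=j\}\cap N\bigr)=\tilde p^i_0(A_j),
\]
since $\tilde p^i_0(N)=0$ by the no-ties assumption.

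Next, for $j\neq j'$ let $\sigma$ be the transposition of coordinates $j$ and $j'$, acting on $\mathbb R^V$ by $(\sigma u)_k=u_{\sigma(k)}$. Then $\sigma^{-1}(A_{j'})=A_j$, or more precisely $\sigma$ maps $A_j$ bijectively onto $A_{j'}$; this is the point to check with care about the direction of the action. Coordinate-permutation invariance means $\sigma_\#\tilde p^i_0=\tilde p^i_0$, so $\tilde p^i_0(A_j)=\tilde p^i_0(A_{j'})$. Since the $A_j$ are disjoint and their union has full measure, they sum to $1$, which gives $p^i_0(j)=\tilde p^i_0(A_j)=1/V$.

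For the full sequence, $\tilde p_0=\prod_i\tilde p^i_0$ means the blocks $z_0^1,\dots,z_0^L$ are independent, and $\argmax$ acts blockwise. So for any $x\in[V]^L$,
\[
p_0(x)=\tilde p_0\Bigl(\bigcap_{i=1}^L\{\argmax(z^i)=x^i\}\Bigr)=\prod_{i=1}^L p^i_0(x^i)=V^{-L},
\]
which is the uniform distribution on $[V]^L$. The only genuine subtlety is the tie-breaking rule $\min S(u)$, which is not permutation-equivariant; it is neutralized by working with the strict events $A_j$ and discarding the null set $N$. No step is a real obstacle beyond stating the permutation action consistently.
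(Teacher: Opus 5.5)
Your proof is correct and follows the paper's argument: permutation invariance forces the per-position argmax probabilities to be equal, and the product structure of $\tilde p_0$ then gives uniformity on $[V]^L$. The paper handles tie-breaking through $\argmax(\sigma u)=\sigma(\argmax u)$ on the almost-sure unique-maximum event, while you use the strict events $A_j$; one small slip is that $\bigcup_j A_j$ is not exactly $N^c$ (e.g.\ $(3,1,1)\in A_1\cap N$ when $V=3$), but all you need is that the complement of $\bigcup_j A_j$ and each $\{\argmax(u)=j\}\triangle A_j$ lie in the null set $N$, which holds.
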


For time $t=1$, we have the special case $z_1 = \operatorname{onehot}(x_1)$. So the $\argmax$ map only undoes the $\operatorname{onehot}$ encoding $\argmax (z_1) = \argmax (\operatorname{onehot}(x_1))=x_1$.

The factorization results below use the following argmax-compatible lifting of a discrete coupling. First sample $Z_0\sim \tilde p_0$ and set $X_0=\argmax(Z_0)$. Then sample $X_1$ from a discrete coupling kernel $\pi(\cdot\mid X_0)$ using randomness conditionally independent of $Z_0$ given $X_0$. When the endpoint is intended to be the data distribution $q$, we assume this kernel has marginal $X_1\sim q$, i.e.
\begin{equation}
q(x_1)=\sum_{x_0\in[V]^L}p_0(x_0)\,\pi(x_1\mid x_0)=\frac{1}{V^L}\sum_{x_0\in[V]^L}\pi(x_1\mid x_0).
\end{equation}
Equivalently, the induced discrete joint is $\bar\pi(x_0,x_1)=p_0(x_0)\pi(x_1\mid x_0)$. Finally set $Z_1=\operatorname{onehot}(X_1)$. Then define the continuous convex interpolant
\begin{equation}
    Z_t=(1-\tilde k_t)Z_0+\tilde k_t Z_1,
    \qquad X_t=\argmax(Z_t).
\end{equation}
This construction allows an arbitrary discrete coupling kernel between $X_0$ and $X_1$ (with the desired endpoint marginal imposed when needed) subject to the argmax-compatible conditional-independence rule above, but it is not an arbitrary continuous coupling $\tilde\pi(z_0,z_1)$: conditional on $X_0$, the endpoint $X_1$ depends on $Z_0$ only through $X_0=\argmax(Z_0)$. The kernel $\pi(x_1\mid x_0)$ may correlate target positions, but once $x_1$ is fixed, the remaining randomness in $X_t$ comes from the conditionally independent source blocks $Z_0^i\mid X_0^i=x_0^i$. This conditional-independence structure is what preserves the per-position factorization used below.

Given that we know how the source and target distributions are transformed, the remaining object is the per-position conditional $p^i_{t|0,1}(x_t^i\mid x_0^i,x_1^i)$. 

The following result is crucial. It is precisely where the conditional independence $Z_0\perp X_1\mid X_0$ is used: without it, conditioning on $X_1$ could introduce dependence among the within-cell source blocks.

\begin{lemma}[Factorization across positions]
\label{lem:factorization}
Under the lifted coupling construction above, for each $t \in [0,1]$ and any pair $(x_0,x_1)$ in the support of the lifted discrete coupling, the conditional distribution of $x_t$ given $(X_0,X_1)=(x_0,x_1)$ factorizes across positions:
\begin{equation}
p_{t|0,1}(x_t \mid x_0,x_1)
= \prod_{i=1}^L p_{t|0,1}^i(x_t^i \mid x_0^i,x_1^i),
\end{equation}
for some per-position conditionals $p_{t|0,1}^i(\cdot \mid x_0^i,x_1^i)$ on $[V]$.
\end{lemma}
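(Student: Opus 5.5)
The plan is to compute the conditional law of $Z_0$ given $(X_0,X_1)=(x_0,x_1)$ directly, show that it is a product over positions, and then push this product through the blockwise map $z\mapsto \argmax((1-\tilde k_t)z+\tilde k_t\,\operatorname{onehot}(x_1))$.

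\textbf{Step 1: the source blocks given $X_0$.} Since $\tilde p_0=\prod_i \tilde p_0^i$ and $X_0^i=\argmax(Z_0^i)$ depends only on block $i$, the event $\{X_0=x_0\}$ is the product cell $\prod_i C_{x_0^i}$, where $C_j=\{u\in\mathbb R^V:\argmax(u)=j\}$ is Borel by Lemma~\ref{lem:argmax-measurable}. Assume $(x_0,x_1)$ lies in the support, so $p_0(x_0)>0$ and $\pi(x_1\mid x_0)>0$. Then the conditional law of $Z_0$ given $X_0=x_0$ is
\begin{equation*}
\mu_{x_0}=\prod_{i=1}^L \tilde p_0^i(\cdot\mid C_{x_0^i}),
\end{equation*}
because a product measure restricted to a product set and renormalized is again a product. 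In particular, the blocks $Z_0^i$ are conditionally independent given $X_0$.

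\textbf{Step 2: adding $X_1$.} This is where the conditional-independence hypothesis $Z_0\perp X_1\mid X_0$ enters. For Borel $A\subseteq\mathbb R^{LV}$,
\begin{equation*}
P(Z_0\in A,\,X_0=x_0,\,X_1=x_1)=P(Z_0\in A,\,X_0=x_0)\,\pi(x_1\mid x_0).
\end{equation*}
Dividing by $P(X_0=x_0,X_1=x_1)=p_0(x_0)\pi(x_1\mid x_0)$ shows that the conditional law of $Z_0$ given $(X_0,X_1)=(x_0,x_1)$ is still $\mu_{x_0}$. This product does not depend on $x_1$, even though $\pi$ may correlate the target positions.

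\textbf{Step 3: pushing forward blockwise.} Given $(x_0,x_1)$, we have $Z_1=\operatorname{onehot}(x_1)$ deterministically, so
\begin{equation*}
X_t^i=g_t^i(Z_0^i):=\argmax\bigl((1-\tilde k_t)Z_0^i+\tilde k_t e_{x_1^i}\bigr),
\end{equation*}
where $e_{x_1^i}$ is the one-hot block. Each $g_t^i$ is Borel, being an affine map composed with $\argmax$. Under $\mu_{x_0}$ the blocks are independent, so the vector $(g_t^i(Z_0^i))_i$ has law equal to the product of the individual pushforwards. Define
\begin{equation*}
p^i_{t|0,1}(a\mid x_0^i,x_1^i):=\tilde p_0^i\bigl(g_t^i\in\{a\}\,\big|\,C_{x_0^i}\bigr).
\end{equation*}
This depends only on $(x_0^i,x_1^i)$, $t$ and $\tilde p_0^i$, which gives the claimed factorization.

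\textbf{Main obstacle.} The delicate point is the measure-theoretic conditioning in Step 2. Because $X_0$ and $X_1$ are discrete, conditioning reduces to elementary ratios on events of positive probability, and restricting to support pairs handles the remaining cases. The paper states the rule as ``conditionally independent randomness,'' and this has to be translated into the displayed identity above; that translation is the step needing care. Everything else is the standard fact that product measures stay products under restriction to product sets and under blockwise maps.
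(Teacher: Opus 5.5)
Your proof is correct and follows essentially the same route as the paper. Both arguments condition the product source on the product cell $\{X_0=x_0\}$, use $Z_0\perp X_1\mid X_0$ to show that additionally conditioning on $X_1=x_1$ leaves this product law unchanged, and push it forward through the blockwise maps $Z_0^i\mapsto\argmax\bigl((1-\tilde k_t)Z_0^i+\tilde k_t\operatorname{onehot}(x_1^i)\bigr)$. Your displayed identity $P(Z_0\in A,X_0=x_0,X_1=x_1)=P(Z_0\in A,X_0=x_0)\,\pi(x_1\mid x_0)$ just makes explicit the conditional-independence step that the paper states in words.
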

The only remaining missing piece is the form of $p_{t|0,1}^i(x_t^i \mid x_0^i,x_1^i)$. This is answered by the following theorem which also summarizes previous findings:

\begin{theorem}[Discrete conditional path induced by a continuous flow]
\label{thm:discrete-flow}
Assume $V\ge 2$ and let $\tilde k:[0,1]\to[0,1]$ be nondecreasing with $\tilde{k}_0=0$ and $\tilde{k}_1=1$. For each $t$, write $c_t:=\tilde k_t/(1-\tilde k_t)$, with the convention $c_t=+\infty$ when $\tilde k_t=1$. Assume that the continuous source factorizes across positions, $\tilde p_0=\prod_{i=1}^L\tilde p_0^i$, that each per-position source law $\tilde p_0^i$ is coordinate-permutation invariant, and that if $Z^i\sim\tilde p_0^i$, then $Z^i$ has no ties almost surely and, for every $a\neq b$, the pairwise gap $Z^i_a-Z^i_b$ has no atom at $c_t$ for each $t$ with $\tilde k_t<1$. Assume also that the coupling is the argmax-compatible lifted coupling described above, in particular satisfying $\mathcal L(X_1\mid Z_0)=\pi(\cdot\mid\argmax Z_0)$: $Z_0\sim\tilde p_0$, $X_0=\argmax(Z_0)$, sample $X_1$ from $\pi(\cdot\mid X_0)$ using randomness conditionally independent of $Z_0$ given $X_0$ (equivalently, $\mathcal L(X_1\mid Z_0)=\pi(\cdot\mid\argmax Z_0)$), and set $Z_1=\operatorname{onehot}(X_1)$. For each $t\in[0,1]$, define $Z_t=(1-\tilde k_t)Z_0+\tilde k_t Z_1$ and $\tilde p_t:=\mathcal L(Z_t)$. Then the law $p_t = (\argmax)_{\#}\tilde{p}_t$ is a probability distribution on $[V]^L$ with the following properties:
\begin{enumerate}
    \item[(i)] The source $p_0$ is uniform on $[V]^L$.
    \item[(ii)] If the lifted discrete kernel has endpoint marginal
\begin{equation}
q(x_1)=\sum_{x_0\in[V]^L}p_0(x_0)\,\pi(x_1\mid x_0)=\frac{1}{V^L}\sum_{x_0\in[V]^L}\pi(x_1\mid x_0),
\end{equation}
then the terminal marginal satisfies $p_1=q$.
    \item[(iii)] Interpolation probabilities factor per position,
    \begin{equation}
        p_{t|0,1}(x_t \mid x_0,x_1)
        = \prod_{i=1}^L p_{t|0,1}^i(x_t^i \mid x_0^i,x_1^i).
    \end{equation}
    \item[(iv)] The unconditional discrete path is the mixture
    \begin{equation}
    p_t(x)=\sum_{x_0,x_1\in[V]^L}p_{t|0,1}(x\mid x_0,x_1)\,\bar\pi(x_0,x_1),
    \qquad
    \bar\pi(x_0,x_1)=p_0(x_0)\pi(x_1\mid x_0).
    \end{equation}
    \item[(v)] For any pair $(x_0,x_1)$ in the support of the lifted discrete coupling and each position $i$, there exists $k_t^i\in[0,1]$ such that
    \begin{equation}
    p_{t|0,1}^i(x_t^i \mid x_0^i,x_1^i)
    = (1-k_t^i)\,\delta_{x_0^i}(x_t^i) + k_t^i\,\delta_{x_1^i}(x_t^i).
    \end{equation}
    For $x_0^i\neq x_1^i$, this coefficient is
    \begin{equation}
    k_t^i
    =
    \mathbb P\!\left(
    Z^i_{x_0^i}-Z^i_{x_1^i}
    \le
    c_t
    \;\middle|\;
    \argmax(Z^i)=x_0^i
    \right),
    \end{equation}
    where $Z^i\sim\tilde p_0^i$. For distinct tokens, $t\mapsto k_t^i$ is nondecreasing, and $k_0^i=0$, $k_1^i=1$ under the conventions $c_0=0$ and $c_1=+\infty$. When $x_0^i=x_1^i$, the two delta masses coincide, so the value of the coefficient is immaterial; for notational consistency we assign it the same value as in the distinct-token case. Thus $k_t^i$ depends only on $\tilde{k}_t$ and the $i$-th continuous source law $\tilde p_0^i$, not on $(x_0,x_1)$. If the per-position source laws are identical, then $k_t^i=k_t$ for all $i$.
\end{enumerate}
In other words, the per-position conditionals of the discrete conditional path generated by the continuous one coincide with the standard convex-interpolant form used in discrete flow matching. In the common-source case, the usual single scalar coefficient $k_t$ is shared across positions.
\end{theorem}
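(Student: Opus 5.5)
Items (i)--(iii) follow directly from the earlier lemmas, so the real work is item (v). Item (i) is Lemma~\ref{lem:uniform-source}, since the per-position laws are product, permutation invariant and tie-free. Item (ii) uses $\tilde k_1=1$: then $Z_1=\operatorname{onehot}(X_1)$ and $\argmax(Z_1)=X_1$, so $p_1$ is the law of $X_1$, which is $q$ by the marginal assumption. Item (iii) is Lemma~\ref{lem:factorization}. Item (iv) is the law of total probability: conditioning on $(X_0,X_1)$, whose joint law is $\bar\pi(x_0,x_1)=p_0(x_0)\pi(x_1\mid x_0)$ by construction.

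For (v) I would fix $(x_0,x_1)$ in the support and a position $i$, and work with the per-position conditional provided by Lemma~\ref{lem:factorization}. Conditional on $(X_0,X_1)=(x_0,x_1)$, the conditional independence $Z_0\perp X_1\mid X_0$ and the product structure show that $Z_0^i$ has law $\mathcal L(Z^i\mid \argmax Z^i=x_0^i)$ with $Z^i\sim\tilde p_0^i$. The block at time $t<1$ is
\[
Z_t^i=(1-\tilde k_t)\bigl(Z_0^i+c_t\,\operatorname{onehot}(x_1^i)\bigr),
\]
and since $\argmax$ is invariant under positive scaling, $X_t^i=\argmax(Z_0^i+c_t e_{x_1^i})$.

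If $x_1^i=x_0^i$, adding $c_t\ge0$ to the already maximal coordinate keeps it maximal. This holds almost surely, because there are no ties and the tie-break rule is not triggered, so $X_t^i=x_0^i$ and any coefficient works. If $x_1^i\neq x_0^i$, only coordinate $x_1^i$ is raised. Every other coordinate $j\notin\{x_0^i,x_1^i\}$ stays strictly below $Z^i_{x_0^i}$, so the argmax is either $x_0^i$ or $x_1^i$. It is $x_1^i$ exactly when $Z^i_{x_1^i}+c_t> Z^i_{x_0^i}$, with the tie case $Z^i_{x_0^i}-Z^i_{x_1^i}=c_t$ resolved by the min-index rule. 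The no-atom assumption makes that event null conditionally. Conditioning on $\{\argmax=x_0^i\}$ is harmless here: this event has probability $1/V>0$ by Lemma~\ref{lem:uniform-source}, so a null set remains null after conditioning. Hence
\[
p^i_{t|0,1}(x_1^i\mid x_0^i,x_1^i)=\mathbb P\bigl(Z^i_{x_0^i}-Z^i_{x_1^i}\le c_t\mid \argmax Z^i=x_0^i\bigr),
\]
and the remaining mass sits on $x_0^i$. At $t$ with $\tilde k_t=1$ the argmax is $x_1^i$ directly, matching $c_t=+\infty$.

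To show that $k^i_t$ is independent of the pair of tokens, I would use permutation invariance. Applying a transposition or permutation $\sigma$ of coordinates maps the event $\{\argmax=a\}$ to $\{\argmax=\sigma(a)\}$. This step relies on there being no ties, since only then is the min-index rule irrelevant. The same $\sigma$ maps the gap $Z_a-Z_b$ to $Z_{\sigma(a)}-Z_{\sigma(b)}$, so the conditional probability is the same for all ordered distinct pairs $(a,b)$. The probability is nondecreasing in $t$ because $c_t$ is nondecreasing. At $t=0$ we have $c_0=0$, and the gap $Z_a-Z_b$ is a.s.\ strictly positive on $\{\argmax=a\}$, so $k_0^i=0$. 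At $t=1$ the value is $1$. Identical per-position laws give identical coefficients.

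The main obstacle is the conditioning step: showing that, given $(x_0,x_1)$, the block $Z_0^i$ has exactly the law $\tilde p_0^i(\cdot\mid\argmax=x_0^i)$. This needs a careful disintegration using $\mathcal L(X_1\mid Z_0)=\pi(\cdot\mid\argmax Z_0)$ together with the product form of $\tilde p_0$. That argument essentially reproduces the content of Lemma~\ref{lem:factorization}, and I would invoke it there rather than redo it.
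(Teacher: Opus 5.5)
Your proposal is correct and follows essentially the same route as the paper: items (i)--(iii) come from the lemmas, and for (v) you reduce $\argmax(Z_t^i)$ to a competition between coordinates $x_0^i$ and $x_1^i$ decided by comparing the gap $Z^i_{x_0^i}-Z^i_{x_1^i}$ with $c_t$, discard the tie event via the no-atom assumption, and use coordinate-permutation invariance to show the coefficient does not depend on the token pair. Your rescaling $Z_t^i=(1-\tilde k_t)(Z_0^i+c_t e_{x_1^i})$ and your explicit checks of (iv), monotonicity, and $k_0^i=0$, $k_1^i=1$ are small additions the paper leaves implicit; the conditioning step you flag as the main obstacle is elementary here, since $\{(X_0,X_1)=(x_0,x_1)\}$ has positive probability for pairs in the support.
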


\begin{remark}[Independent endpoints]
\label{rem:independent-endpoints}
When $Z_0\sim\tilde p_0$ and $Z_1=\operatorname{onehot}(X_1)$,
with $X_1\sim q$, are sampled independently, the required conditional
independence holds automatically, since $X_0=\argmax(Z_0)$ is a
deterministic function of $Z_0$:
\begin{equation}
    Z_0\perp\!\!\!\perp Z_1\mid X_0.
\end{equation}
Thus, no discrete coupling kernel needs to be specified separately:
independence induces $\pi(x_1\mid x_0)=q(x_1)$, and the required
endpoint marginal is automatically satisfied.
\end{remark}
\begin{remark}[Mask-source variants]
\label{subsec:reverse_lifts}
A similar result can be stated for multi-mask sources
\citep{haxholli2026minibatch}. One possible construction extends
the vocabulary from $[V]$ to $[2V]$, as well as $\mathbb{R}^{LV}$
to $\mathbb{R}^{2LV}$, and uses independent coordinates with
separated uniform supports, for example $\mathcal{U}(a,b)$ on
data-token coordinates $[1:V]$ and $\mathcal{U}(c,d)$ on mask
coordinates $[V+1:2V]$ with $c>b$.
\end{remark}
\begin{remark}[Min--max target encoding]
\label{rem:min-max-target}
When $G$ is atomless and supported on $(0,\infty)$, we can randomize
the target encoding while preserving its argmax. At each position
$i$, draw two independent samples from $G$. Place the larger sample
in coordinate $X_1^i$ and the smaller sample in a uniformly chosen
different coordinate, leaving the remaining entries equal to zero.
Since the samples are positive and distinct almost surely, the
resulting vector still has argmax $X_1^i$.

Conditional on $(X_0,X_1)$, take all additional draws to be mutually
independent and independent of $Z_0$. Under the same source and
coupling assumptions as above, the discrete dual of the resulting
convex interpolant has the per-position conditional form of
Equation~(10) in \citet{gat2024discrete} whenever $x_0^i\ne x_1^i$.
\end{remark}

\subsection{Source-dependent coefficients induced by the duality}\label{subsec:source_coefficients}

Theorem~\ref{thm:discrete-flow} shows that the projected continuous flow has the same conditional form as a discrete convex-interpolant flow. In the common per-position source setting, the remaining question is how the shared scalar coefficient $k_t$ depends on the continuous source. This coefficient is the probability, conditional on the source token and target token being distinct, that the target coordinate has overtaken the source coordinate by time $t$. Thus the source distribution does not merely choose where the continuous trajectories begin; it determines the effective time dependence of the induced discrete flow.

For the following source families, assume $V\ge 2$ and write
\begin{equation}
    c_t := \frac{\tilde{k}_t}{1-\tilde{k}_t},
\end{equation}
with the convention that $c_t=+\infty$ when $\tilde{k}_t=1$. The derivations are provided in Appendix~\ref{app:coefficient_derivations}.

\paragraph{Gaussian source.}
For i.i.d. standard Gaussian coordinates, the induced coefficient is
\begin{equation}\label{eq:gaussian_coeff_body}
    k_t^{\mathrm{Gauss}}
    = V\int_{-\infty}^{\infty}
    \varphi(m)\Phi(m)^{V-2}\bigl(\Phi(m)-\Phi(m-c_t)\bigr)\,dm,
\end{equation}
where $\varphi$ and $\Phi$ denote the standard normal density and distribution function. Equivalently, if
\begin{equation}
    \rho_t^{\mathrm{Gauss}}
    = \mathbb{E}_{G\sim N(0,1)}\left[\Phi(G+c_t)^{V-1}\right],
\end{equation}
then
\begin{equation}
    k_t^{\mathrm{Gauss}}=\frac{V\rho_t^{\mathrm{Gauss}}-1}{V-1}.
\end{equation}
This form makes the vocabulary dependence explicit: as $V$ grows, the target coordinate has to compete with the maximum of many Gaussian coordinates, delaying the induced transition.

\paragraph{Bounded uniform source.}
For i.i.d. uniform coordinates on an interval of width $\epsilon>0$, the induced coefficient is
\begin{equation}\label{eq:uniform_coeff_body}
    k_t^{\mathrm{Unif}}
    = \frac{Vr_t-r_t^V}{V-1},
    \qquad
    r_t = \operatorname{clip}\!\left(\frac{c_t}{\epsilon},0,1\right).
\end{equation}
Here $\operatorname{clip}(a,0,1)=\min\{\max\{a,0\},1\}$. For fixed vocabulary size $V$, the coefficient depends on the width of the interval, not on its location. Shifting the interval affects the geometry of the continuous trajectories, while the induced discrete coefficient is controlled by the support width and $V$. Because of the clipping, this coefficient can reach $1$ before the endpoint: under the linear schedule $\tilde k_t=t$, we have $c_t=t/(1-t)$, so $r_t=1$ once $t\ge \epsilon/(1+\epsilon)$. After that time the induced conditional path has already reached the target, and any velocity formula containing $(1-k_t^{\mathrm{Unif}})^{-1}$ should be interpreted only on the interval where $k_t^{\mathrm{Unif}}<1$.

\paragraph{Centered negative-exponential source.}
Let $\beta>0$ and $Z=\beta^{-1}-E$ where $E\sim\mathrm{Exp}(\beta)$; the shift makes $\mathbb E[Z]=0$. For i.i.d. coordinates of this form, the induced coefficient is
\begin{equation}\label{eq:negexp_coeff_body}
    k_t^{\mathrm{NegExp}} = 1-\exp(-\beta c_t).
\end{equation}
Unlike the Gaussian coefficient, this expression has no explicit vocabulary-size dependence.

Figures~\ref{fig:coefficients_time} and~\ref{fig:gaussian_scaling} visualize these coefficients under the linear interpolant $\tilde{k}_t=t$. Figure~\ref{fig:coefficients_time} compares the three source families at GPT-2 vocabulary scale, while Figure~\ref{fig:gaussian_scaling} isolates the vocabulary dependence of the Gaussian source.

\begin{remark}[Random-rate negative-exponential source]
\label{rem:random_rate_negexp}
A hierarchical variant of the centered negative-exponential source recovers the continuous interpolation schedule exactly. For each position $i$, sample $A_i\sim\operatorname{Exp}(1)$ and, conditional on $A_i=a$, let the $V$ coordinates be i.i.d.\ as $a^{-1}-E$ with $E\sim\operatorname{Exp}(a)$. Since the source argmax is independent of $A_i$, averaging the fixed-rate coefficient $1-e^{-a c_t}$ over $A_i$ gives

$$
k_t
=
1-\mathbb{E}[e^{-A_i c_t}]
=
1-\frac{1}{1+c_t}
=
\tilde{k}_t,
\qquad
c_t=\frac{\tilde{k}_t}{1-\tilde{k}_t}.
$$

Thus the induced discrete coefficient coincides with the continuous one and is independent of the vocabulary size $V$. However, this source lacks a finite first absolute moment, which may limit its practical usefulness.
\end{remark}

\begin{figure}[H]
    \centering
    \includegraphics[width=0.75\linewidth]{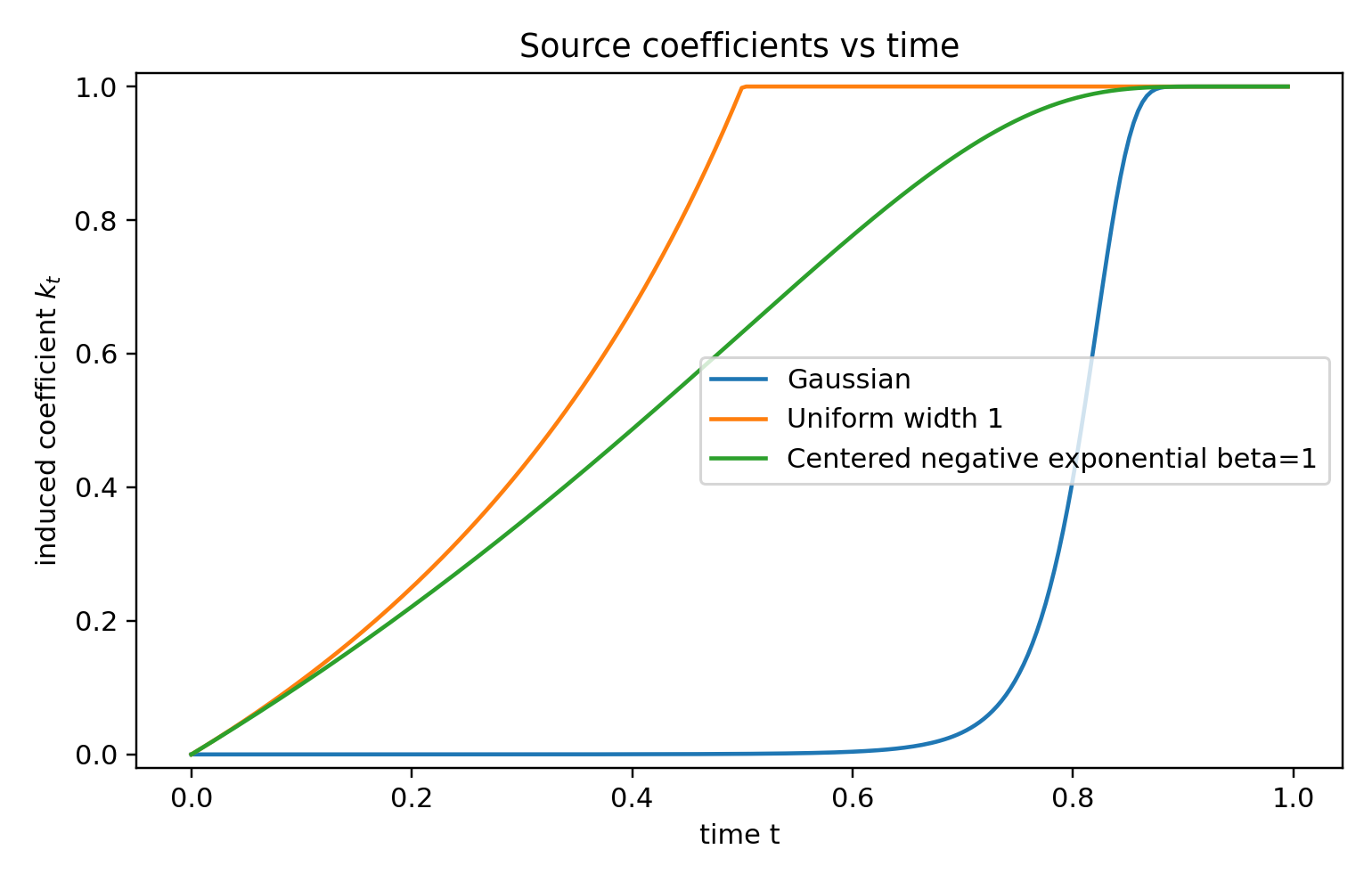}
    \caption{Induced coefficient $k_t$ for Gaussian, bounded uniform with $\epsilon=1$, and centered negative-exponential with $\beta=1$ under the linear interpolant $\tilde{k}_t=t$ and vocabulary size $V=50{,}257$.}
    \label{fig:coefficients_time}
\end{figure}

\begin{figure}[H]
    \centering
    \includegraphics[width=0.75\linewidth]{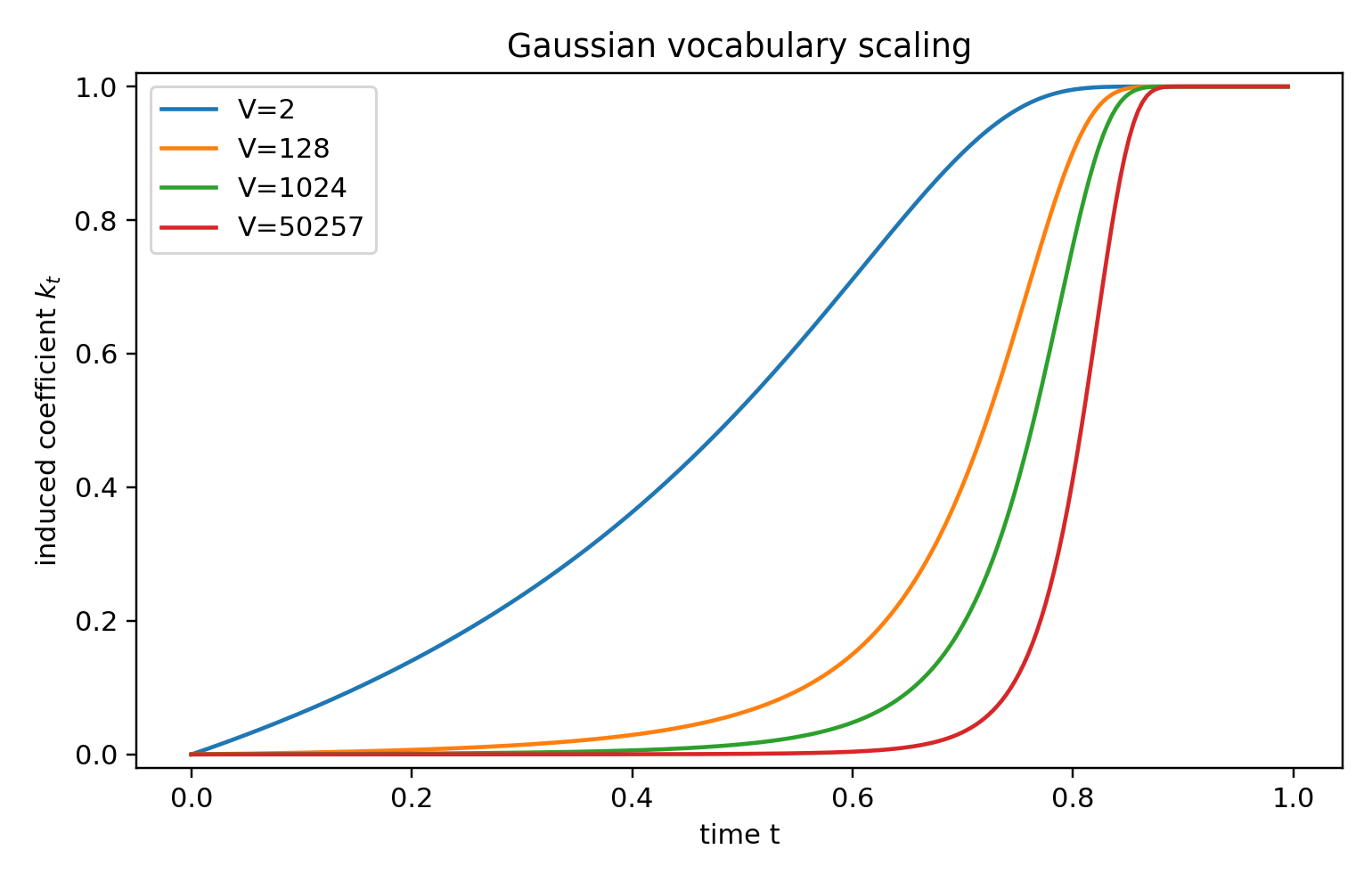}
    \caption{Vocabulary-size dependence of the Gaussian induced coefficient under the linear interpolant $\tilde{k}_t=t$. Larger vocabularies shift the effective transition later in time.}
    \label{fig:gaussian_scaling}
\end{figure}

\subsection{Tiny 10k-step generative check}\label{subsec:tiny_experiment}

As a small sanity check, we ran short 10k-step OpenWebText pilot trainings comparing the Gaussian source to a centered negative-exponential source and a uniform source. In all cases we trained a two time denoising model \citet{lee2026flowmaplanguage} with identical architecture and hyperparameters. For the Gaussian source, we use the decoding-error time reparameterization of \citet{lee2026flowmaplanguage}, which they show substantially reduces generative perplexity compared to the non-reparameterized case while leaving sample entropy essentially unchanged. For the bounded-uniform and centered negative-exponential sources, we instead feed the continuous time parameter directly (non-reparameterized) into the network.

Following the evaluation decomposition advocated by \citet{pynadath2026generative}, we combine generative perplexity and sample entropy into the plug-in estimate
\begin{equation}
\widehat D_{\mathrm{KL}}
=
\log(\mathrm{GenPPL})-\widehat H,
\end{equation}
where $\log(\mathrm{GenPPL})$ estimates the cross-entropy against the reference language model and $\widehat H$ is the reported unigram entropy, used as a surrogate for joint sequence entropy per token as in \citet{pynadath2026generative}. This estimate should be read only as a single operating-point diagnostic, not as a replacement for a full generative frontier. Samples were generated in a single step.

\begin{table}[H]
\centering
\caption{Tiny 10k-step OpenWebText pilot. Lower GenPPL, cross-entropy, and estimated KL are better; higher sample entropy is better. One step generation. Single training run.}
\label{tab:tiny_10k_eval}
\begin{tabular}{lrrrr}
\toprule
Source & GenPPL $\downarrow$ & $\log(\mathrm{GenPPL})$ $\downarrow$ & Entropy $\uparrow$ & $\widehat D_{\mathrm{KL}}$ $\downarrow$ \\
\midrule
Gaussian & 428.88 & 6.061 & 4.148 & 1.913 \\
Centered neg.-exp. & 258.51 & 5.555 & 4.022 & 1.533 \\
$U(-4,-3)$ & 244.68 & 5.497 & 4.05 & 1.447 \\
\bottomrule
\end{tabular}
\end{table}

In this early-training run, the centered negative-exponential source reduces estimated KL by 0.380 nats relative to the Gaussian source, a 19.9\% relative reduction, while sample entropy decreases only modestly. The result is preliminary, but it is consistent with the source-geometry prediction that changing the continuous source can alter generative behavior before long training.

\begin{figure}[H]
    \centering
    \begin{subfigure}[t]{0.49\linewidth}
        \centering
        \includegraphics[width=\linewidth]{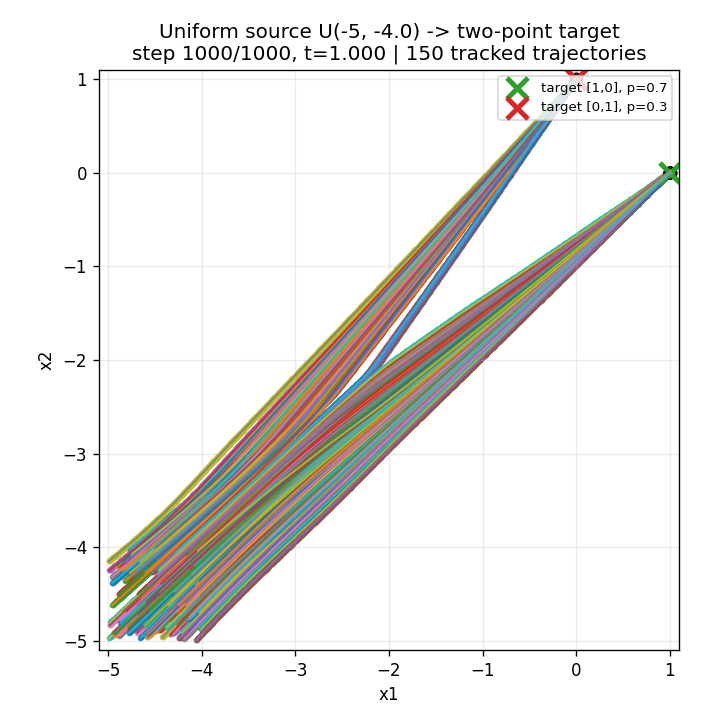}
        \caption{Shifted bounded uniform source.}
    \end{subfigure}
    \hfill
    \begin{subfigure}[t]{0.49\linewidth}
        \centering
        \includegraphics[width=\linewidth]{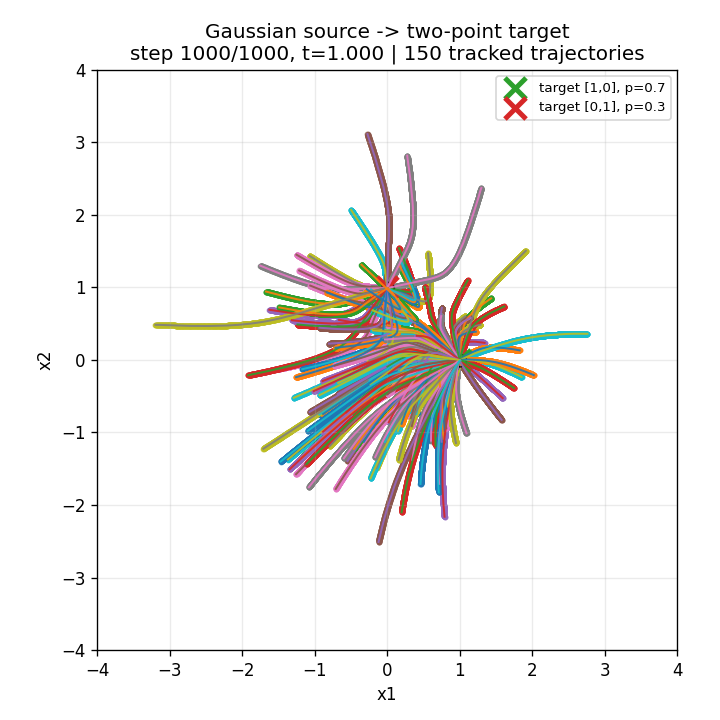}
        \caption{Gaussian source.}
    \end{subfigure}
    \caption{Toy tracked trajectories for a learned flow in a two-point target problem. The theorem's conditional convex interpolants are straight line segments; these plots instead visualize trajectories of a learned marginal toy transport tracked over a fixed integration grid.}
    \label{fig:toy_trajectories}
\end{figure}

\subsection{Toy trajectory geometry}\label{subsec:trajectory_geometry}

The coefficient $k_t$ describes the induced discrete path associated with the conditional convex interpolants. Figure~\ref{fig:toy_trajectories} instead shows trajectories produced by a learned marginal toy flow in a two-point target experiment with vocabulary size two and sequence length one. The same small neural vector-field model, loss, and integration grid are used for both sources. The plot is purely diagnostic: it visualizes learned transport trajectories, not the theorem's conditional straight-line interpolants. The comparison also confounds source family with source placement and scale relative to the one-hot targets, so it should be read only as a qualitative illustration of possible geometric effects.

\section{Conclusion}

The central result of this paper is that continuous flows with convex interpolants, one-hot encoded targets, and product coordinate-permutation-invariant continuous sources induce discrete conditional flows with convex-interpolant conditionals after position-wise argmax projection. The supporting lemmas show that the argmax map is measurable, that symmetric continuous sources push forward to uniform discrete sources, and that the projected conditional law factors across positions, with each factor being the convex-interpolation in the discrete flow matching framework .

The source-coefficient calculations show why this duality is not only formal. The continuous source determines the scalar coefficient of the induced discrete flow. For Gaussian sources, this coefficient is delayed as vocabulary size grows. For bounded uniform sources, for fixed vocabulary size, the coefficient is controlled by support width. For centered negative-exponential sources, the coefficient has a simple vocabulary-independent form. These formulas suggest that source design should be treated as a structural part of continuous categorical flow modeling, not as a neutral implementation choice.

\newpage
\appendix
\section{Duality Proofs}\label{A}

\subsection{Proof of Lemma \ref{lem:argmax-measurable}}

\begin{proof}
Fix $j \in [V]$. Define
\begin{equation}
A(j) := \{u \in \mathbb{R}^V : \argmax(u) = j\}.
\end{equation}
By the definition of $\argmax$ with ties broken by the smallest index,
\begin{equation}
A(j)
= \left(\bigcap_{k<j}\{u\in\mathbb R^V: u_j>u_k\}\right)
\cap
\left(\bigcap_{k>j}\{u\in\mathbb R^V: u_j\ge u_k\}\right).
\end{equation}
This is a finite intersection of open or closed half-spaces in $\mathbb{R}^V$, and hence $A(j)$ is Borel.

For a sequence $x = (x^1,\dots,x^L) \in [V]^L$, its preimage under $\argmax$ is
\begin{equation}
\argmax^{-1}(\{x\})
= \prod_{i=1}^L A(x^i) 
\subset (\mathbb{R}^V)^L \cong \mathbb{R}^{LV}.
\end{equation}
A finite product of Borel sets is Borel in the product space, so $\argmax^{-1}(\{x\})$ is Borel for every $x \in [V]^L$. Therefore $\argmax$ is measurable as a map from $(\mathbb{R}^{LV}, \mathcal{B}(\mathbb{R}^{LV}))$ to the discrete space $([V]^L, \mathcal{P}([V]^L))$.
\end{proof}

\subsection{Proof of Lemma \ref{lem:uniform-source}}

\begin{proof}
Fix a position $i$. Since ties have probability zero, $\argmax(Z_0^i)$ is almost surely unique. For any two indices $j,k\in[V]$, choose a coordinate permutation $\sigma$ sending $j$ to $k$. We use the convention $(\sigma u)_r=u_{\sigma^{-1}(r)}$, so $\argmax(\sigma u)=\sigma(\argmax u)$ whenever the argmax is unique. By coordinate-permutation invariance, $Z_0^i$ and $\sigma Z_0^i$ have the same law. Therefore
\begin{equation}
\mathbb P(\argmax(Z_0^i)=j)
=\mathbb P(\argmax(\sigma Z_0^i)=k)
=\mathbb P(\argmax(Z_0^i)=k).
\end{equation}
All tokens have equal probability, and the probabilities sum to one, so each is $1/V$.

Because $\tilde p_0=\prod_{i=1}^L\tilde p_0^i$ is a product measure and the argmax is applied independently at each position, the coordinates of $X_0=\argmax(Z_0)$ are independent uniform random variables on $[V]$. Hence $p_0$ is the uniform distribution on $[V]^L$.
\end{proof}

\subsection{Proof of Lemma \ref{lem:factorization}}

\begin{proof}
By construction, $Z_0=(Z_0^1,\dots,Z_0^L)$ has independent components and $X_0^i=\argmax(Z_0^i)$ depends only on $Z_0^i$. Hence, conditional on $X_0=x_0$, the random variables $(Z_0^i)_{i=1}^L$ remain independent, with $Z_0^i$ distributed as $\tilde p_0^i$ conditioned on $\argmax(Z_0^i)=x_0^i$.

Under the lifted coupling construction, $X_1$ is sampled from the discrete kernel $\pi(\cdot\mid X_0)$; hence $Z_0\perp X_1\mid X_0$. Therefore
\[
\mathcal L(Z_0\mid X_0=x_0,X_1=x_1)=\mathcal L(Z_0\mid X_0=x_0),
\]
so conditioning additionally on $X_1=x_1$ does not introduce dependence among the variables $(Z_0^i)_{i=1}^L$. Moreover, $Z_1^i=\operatorname{onehot}(x_1^i)$ is deterministic given $X_1=x_1$.

Thus, conditional on $(X_0,X_1)=(x_0,x_1)$, the pairs $(Z_0^i,Z_1^i)$ are independent across positions. Since
\begin{equation}
Z_t^i=(1-\tilde k_t)Z_0^i+\tilde k_t Z_1^i
\end{equation}
is a measurable function of $(Z_0^i,Z_1^i)$, the variables $(Z_t^i)_{i=1}^L$ are conditionally independent given $(X_0,X_1)$. Finally, $X_t^i=\argmax(Z_t^i)$ depends only on $Z_t^i$, so $(X_t^i)_{i=1}^L$ are conditionally independent given $(X_0,X_1)$, which gives the claimed factorization.
\end{proof}

We now describe the form of each per-position conditional.

\subsection{Proof of Theorem \ref{thm:discrete-flow}}

\begin{proof}
By the lifted coupling construction, we sample $Z_0\sim\tilde p_0$, set $X_0=\argmax(Z_0)$, sample $X_1\sim\pi(\cdot\mid X_0)$, set $Z_1=\operatorname{onehot}(X_1)$, and form the convex interpolant
\[
Z_t=(1-\tilde k_t)Z_0+\tilde k_t Z_1.
\]

For a realization $(z_0,z_1)$ of $(Z_0,Z_1)$, write $z_t=(1-\tilde k_t)z_0+\tilde k_t z_1$. The cases $\tilde k_t=0$ and $\tilde k_t=1$ are the source and target endpoints and are immediate. In particular, if the lifted kernel has $X_1\sim q$, then $Z_1=\operatorname{onehot}(X_1)$ implies $p_1=q$ after argmax projection. We therefore consider $0<\tilde k_t<1$ and write $c_t=\tilde k_t/(1-\tilde k_t)$. For convex interpolants in continuous flows we have
\[
z_t=(1-\tilde k_t)z_0+\tilde k_t z_1,
\qquad\text{hence}\qquad
z_t^i=(1-\tilde k_t)z_0^i+\tilde k_t z_1^i.
\]
Write $j_0:=x_0^i$ and $j_1:=x_1^i$ for the maximizing indices. If $j_0=j_1$, then the boosted coordinate is already the source argmax, so the argmax remains $j_0$ for all $t$ and the conditional law is simply $\delta_{x_0^i}=\delta_{x_1^i}$. Suppose therefore that $j_0\neq j_1$.

Because $z_1^i$ is one-hot, the interpolated vector has the form
\[
z_t^i[j]=(1-\tilde k_t)z_0^i[j]\quad (j\neq j_1),
\qquad
z_t^i[j_1]=(1-\tilde k_t)z_0^i[j_1]+\tilde k_t.
\]
Among coordinates $j\neq j_1$, the maximum is attained at $j_0$, because $\argmax(z_0^i)=j_0$. Thus $\argmax(z_t^i)$ can only be either $j_0$ or $j_1$. More explicitly, on the event $\argmax(z_0^i)=j_0$ we have $\max_{j\neq j_1} z_0^i[j]=z_0^i[j_0]$, so, up to the equality event where tie-breaking may matter,
\[
\argmax(z_t^i)=j_1
\quad\Longleftrightarrow\quad
(1-\tilde k_t)z_0^i[j_1]+\tilde k_t \ >\ (1-\tilde k_t)z_0^i[j_0]
\quad\Longleftrightarrow\quad
z_0^i[j_0]-z_0^i[j_1] \ <\ c_t.
\]
The tie event at time $t$ is exactly $\{z_0^i[j_0]-z_0^i[j_1]=c_t\}$, which has probability zero by the pairwise-gap non-atomicity assumption. Thus using $\leq$ instead of $<$ gives the same transition probability. We use the convention that $c_t=+\infty$ when $\tilde k_t=1$.

If $x_1^i=x_0^i$, then $\delta_{x_0^i}=\delta_{x_1^i}$ and the conditional law is simply the same point mass regardless of the value assigned to $k_t^i$. Hence the mixture representation is degenerate in the equal-token case, and the coefficient only needs to be defined for $x_1^i\neq x_0^i$.

Assume therefore that $x_1^i\neq x_0^i$. At a time $t$ there is a probability $k_t^i(x_0^i,x_1^i)$ that $\argmax(z_t^i)=x_1^i$ and a probability $1-k_t^i(x_0^i,x_1^i)$ that $\argmax(z_t^i)=x_0^i$, namely
\[
p_{t|0,1}^i(x_t^i \mid x_0^i,x_1^i)
=(1-k_t^i(x_0^i,x_1^i))\,\delta_{x_0^i}(x_t^i)+k_t^i(x_0^i,x_1^i)\,\delta_{x_1^i}(x_t^i).
\]
We now show that this coefficient does not depend on the particular distinct pair $(x_0^i,x_1^i)$, but only on $\tilde k_t$ and $\tilde p_0^i$. For $x_1^i\neq x_0^i$, define
\begin{equation}
\label{eq:kt-def-correct}
k_t^i(x_0^i,x_1^i)
=
\frac{
\tilde p_0^i\!\Big(\Big\{z_0^i \ \Big|\ \argmax(z_0^i)=x_0^i,\ z_0^i[x_0^i]-z_0^i[x_1^i]\le c_t\Big\}\Big)
}{
\tilde p_0^i\!\Big(\big\{z_0^i \ \big|\ \argmax(z_0^i)=x_0^i\big\}\Big)
}.
\end{equation}
For the equal-token case, the two delta masses coincide, so the value of the coefficient is immaterial; we use the same value as in the distinct-token case for notational consistency.

To see why $k_t^i(x_0^i,x_1^i)$ does not depend on the particular distinct values of $x_0^i$ and $x_1^i$, use a permutation argument. Let $(a,b)$ and $(a',b')$ be any two ordered pairs with $a\neq b$ and $a'\neq b'$. Choose a coordinate permutation $\sigma$ with $\sigma(a)=a'$ and $\sigma(b)=b'$, again acting by $(\sigma u)_r=u_{\sigma^{-1}(r)}$. Since $Z^i\overset{d}=\sigma Z^i$ under the coordinate-permutation invariance assumption, the denominator events
\[
\{\argmax(Z^i)=a\}
\quad\text{and}\quad
\{\argmax(\sigma Z^i)=a'\}
\]
have the same probability. The same permutation maps the numerator event
\[
\{\argmax(Z^i)=a,\ Z^i_a-Z^i_b\le c_t\}
\]
onto
\[
\{\argmax(\sigma Z^i)=a',\ (\sigma Z^i)_{a'}-(\sigma Z^i)_{b'}\le c_t\}.
\]
Thus the numerator and denominator in \eqref{eq:kt-def-correct} are invariant under the map, and the ratio is identical for all distinct token pairs. Therefore $k_t^i(x_0^i,x_1^i)$ is constant across distinct $(x_0^i,x_1^i)$; denote this constant by $k_t^i\in[0,1]$. We conclude
\[
p_{t|0,1}^i(x_t^i \mid x_0^i,x_1^i)
=(1-k_t^i)\,\delta_{x_0^i}(x_t^i) + k_t^i\,\delta_{x_1^i}(x_t^i).
\]
with $k_t^i$ depending only on $\tilde k_t$ and the source $\tilde p_0^i$, not on $(x_0^i,x_1^i)$. If all per-position source laws are identical, then all $k_t^i$ are equal; in that case we write the common coefficient as $k_t$.

% \paragraph{Gaussian special case.}
% In the case of the source being i.i.d.\ standard normal entries, write $\Phi$ and $\varphi$ for the standard normal CDF and pdf, and take $\tilde k_t=t$. Then
% \[
% \tilde p_0^i\big(\{z_0^i\mid \argmax(z_0^i)=x_0^i\}\big)=\frac{1}{V}.
% \]
% Moreover, for $x_1^i\neq x_0^i$ and $c_t:=\frac{t}{1-t}$, the numerator in \eqref{eq:kt-def-correct} can be written as
% \[
% \tilde p_0^i\!\Big(\Big\{z_0^i \ \Big|\ \argmax(z_0^i)=x_0^i,\ z_0^i[x_0^i]-z_0^i[x_1^i]\le c_t\Big\}\Big)
% =
% \int_{-\infty}^{\infty} \varphi(m)\,\Phi(m)^{V-2}\,\big(\Phi(m)-\Phi(m-c_t)\big)\,dm,
% \]
% and therefore the (token-independent) mixing weight admits the explicit integral representation
% \[
% k_t
% =
% V\int_{-\infty}^{\infty} \varphi(m)\,\Phi(m)^{V-2}\,\big(\Phi(m)-\Phi(m-c_t)\big)\,dm,
% \qquad c_t=\frac{t}{1-t}.
% \]
\end{proof}

\section{Source coefficient derivations}\label{app:coefficient_derivations}

Theorem~\ref{thm:discrete-flow} gives the conditional discrete transition
\begin{equation}
    p_{t|0,1}^i(x_t^i\mid x_0^i,x_1^i)
    = (1-k_t)\delta_{x_0^i}(x_t^i)+k_t\delta_{x_1^i}(x_t^i),
\end{equation}
for $x_0^i\neq x_1^i$. This appendix derives $k_t$ for the source families used in Section~\ref{subsec:source_coefficients}. The plots use the linear interpolant $\tilde k_t=t$, while the derivations below are written in terms of the general ratio $c_t=\tilde k_t/(1-\tilde k_t)$. Throughout, fix a position $i$, write $V\ge 2$ for the vocabulary size, and let
\begin{equation}
    c_t = \frac{\tilde{k}_t}{1-\tilde{k}_t}.
\end{equation}
Condition on the source argmax being coordinate $a$ and the target being coordinate $b\neq a$. The target wins at time $t$ precisely when
\begin{equation}
    Z_a-Z_b \leq c_t
\end{equation}
under the additional conditioning that $Z_a$ is the maximum coordinate of the source.

\subsection{Gaussian source}

Let $Z_1,\dots,Z_V$ be i.i.d. standard normal. Consider the joint event that coordinate $a$ has value $m$ and is the argmax. Its unnormalized density is
\begin{equation}
    \varphi(m)\Phi(m)^{V-1},
\end{equation}
which integrates to $1/V$. Given $Z_a=m$ and the event that $a$ is the maximum, the remaining coordinates are i.i.d. normal variables truncated above by $m$. The event that the target coordinate $b$ is within $c_t$ of the maximum is $Z_b\geq m-c_t$. Therefore the unnormalized mass of source points for which the source argmax is $a$ and the target coordinate wins by time $t$ is
\begin{equation}
    \int_{-\infty}^{\infty}\varphi(m)\Phi(m)^{V-2}\bigl(\Phi(m)-\Phi(m-c_t)\bigr)\,dm.
\end{equation}
The conditioning event $\argmax(Z)=a$ has probability $1/V$. Dividing by $1/V$ gives
\begin{equation}
    k_t^{\mathrm{Gauss}}
    = V\int_{-\infty}^{\infty}\varphi(m)\Phi(m)^{V-2}\bigl(\Phi(m)-\Phi(m-c_t)\bigr)\,dm.
\end{equation}
An equivalent form follows by fixing the target coordinate but not conditioning on the source argmax. Let
\begin{equation}
    \rho_t^{\mathrm{Gauss}}
    = \mathbb{P}(\text{fixed target coordinate wins before conditioning on the source argmax})
    = \mathbb{E}_{G\sim N(0,1)}\big[\Phi(G+c_t)^{V-1}\big].
\end{equation}
Since the source is uniform after argmax projection,
\begin{equation}
    \rho_t^{\mathrm{Gauss}}=\frac{1}{V}+\frac{V-1}{V}k_t^{\mathrm{Gauss}},
\end{equation}
and hence
\begin{equation}
    k_t^{\mathrm{Gauss}}=\frac{V\rho_t^{\mathrm{Gauss}}-1}{V-1}.
\end{equation}

\subsection{Bounded uniform source}

Let $\epsilon>0$ and let $Z_1,\dots,Z_V$ be i.i.d. uniform on an interval $[a,a+\epsilon]$. The location $a$ cancels in all coordinate comparisons, so only the width $\epsilon$ matters. Define
\begin{equation}
    r_t = \min\left\{\max\left\{\frac{c_t}{\epsilon},0\right\},1\right\}.
\end{equation}
First compute the unconditional target-win probability. Given the target coordinate value $Z_b=z$, all other coordinates must be at most $z+c_t$. For $0\leq c_t\leq \epsilon$,
\begin{equation}
    \rho_t^{\mathrm{Unif}}
    = \frac{1}{\epsilon}\int_a^{a+\epsilon-c_t}\left(\frac{z+c_t-a}{\epsilon}\right)^{V-1}dz
      +\frac{1}{\epsilon}\int_{a+\epsilon-c_t}^{a+\epsilon}1\,dz.
\end{equation}
Changing variables gives
\begin{equation}
    \rho_t^{\mathrm{Unif}}=r_t+\frac{1-r_t^V}{V}.
\end{equation}
Using $\rho_t=1/V+(V-1)k_t/V$ yields
\begin{equation}
    k_t^{\mathrm{Unif}}=\frac{Vr_t-r_t^V}{V-1}.
\end{equation}

\subsection{Centered negative-exponential source}

Let $E_1,\dots,E_V$ be i.i.d. exponential variables with rate $\beta$, and set
\begin{equation}
    Z_j = \frac{1}{\beta}-E_j.
\end{equation}
The shift makes the source centered but does not affect argmax comparisons. The target wins unconditionally if every non-target coordinate is at most $Z_b+c_t$. Let $Y=\beta^{-1}-Z_b=E_b$. Then $Y\sim\mathrm{Exp}(\beta)$. If $Y<c_t$, the threshold exceeds the upper endpoint of the support, so the target wins with probability one. If $Y\geq c_t$, then
\begin{equation}
    \mathbb{P}(Z_j\leq Z_b+c_t)=\exp(-\beta(Y-c_t)).
\end{equation}
Therefore
\begin{align}
    \rho_t^{\mathrm{NegExp}}
    &= \mathbb{P}(Y<c_t)
    +\mathbb{E}\left[\exp\bigl(-(V-1)\beta(Y-c_t)\bigr)\mathbf{1}_{\{Y\geq c_t\}}\right]\\
    &= 1-e^{-\beta c_t}+\int_{c_t}^{\infty}\beta e^{-\beta y}e^{-(V-1)\beta(y-c_t)}dy\\
    &= 1-\left(1-\frac{1}{V}\right)e^{-\beta c_t}.
\end{align}
Using again $\rho_t=1/V+(V-1)k_t/V$, we obtain
\begin{equation}
    k_t^{\mathrm{NegExp}}=1-e^{-\beta c_t}.
\end{equation}
This coefficient is independent of the vocabulary size $V$.

\bibliography{refs}
\bibliographystyle{tmlr}

\end{document}